\documentclass[11pt]{article}
\usepackage{graphicx}
\usepackage{fullpage}
\usepackage{xspace}
\usepackage{xcolor}
\usepackage{booktabs}
\usepackage{multirow}
\usepackage{multicol}
\usepackage{amssymb,amsfonts,amsmath,amsthm}
\usepackage{colonequals}
\usepackage{algorithm}
\usepackage{algorithmicx}
\usepackage{algpseudocode}
\usepackage{secdot}
\usepackage{enumitem}
\usepackage{authblk}
\usepackage{changepage} 
\usepackage{soul}
\usepackage{newfloat}
\usepackage{hyperref}
\DeclareFloatingEnvironment[name={Supplementary Figure},fileext=lof]{suppfigure}

\usepackage[backend=bibtex,minnames=3,maxnames=4,sorting=none,doi=false,firstinits=true,isbn=false,url=false,citestyle=numeric-comp]{biblatex}


\def\R{{\mathbb{R}}}

\def\E{{\mathbb E}}
\def\X{{\mathcal X}}

\newtheorem{thm}{Theorem}

\newtheorem{defn}[thm]{Definition}

\DeclareMathOperator*{\argmax}{arg\,max}

\newcommand{\pluseq}{\mathrel{+}=}
\newcommand{\minuseq}{\mathrel{-}=}

\makeatletter
\renewcommand*{\ALG@name}{Perceptron}
\makeatother

\renewbibmacro{in:}{} 
\bibliography{continual_learning_manuscript.bib} 
\DeclareNameAlias{sortname}{last-first} 
\DeclareNameAlias{default}{last-first} 

\newcommand\flymodel{FlyModel\xspace}
\newcommand\mnist{MNIST-20\xspace}


\title{Algorithmic insights on continual learning from fruit flies}

\author[1]{Yang Shen}
\author[2]{Sanjoy Dasgupta}
\author[1]{Saket Navlakha}
\affil[1]{Cold Spring Harbor Laboratory, Simons Center for Quantitative Biology, Cold Spring Harbor, NY}
\affil[2]{Computer Science and Engineering Department, University of California San Diego, La Jolla, CA}
\date{}

\begin{document}
\maketitle

\begin{abstract}
\noindent Continual learning in computational systems is challenging due to catastrophic forgetting. We discovered a two-layer neural circuit in the fruit fly olfactory system that addresses this challenge by uniquely combining sparse coding and associative learning. In the first layer, odors are encoded using sparse, high-dimensional representations, which reduces memory interference by activating non-overlapping populations of neurons for different odors. In the second layer, only the synapses between odor-activated neurons and the odor’s associated output neuron are modified during learning; the rest of the weights are frozen to prevent unrelated memories from being overwritten. We show empirically and analytically that this simple and light-weight algorithm significantly boosts continual learning performance. The fly’s associative learning algorithm is strikingly similar to the classic perceptron learning algorithm, albeit two modifications, which we show are critical for reducing catastrophic forgetting. Overall, fruit flies evolved an efficient life-long learning algorithm, and circuit mechanisms from neuroscience can be translated to improve machine computation.

\end{abstract}

\clearpage
\section*{Introduction}


Catastrophic forgetting --- i.e., when neural networks inadvertently overwrite old memories with new memories --- remains a long-standing problem in machine learning~\cite{Parisi2019}. Here, we studied how fruit flies learn continuously to associate odors with behaviors and discovered a circuit motif capable of alleviating catastrophic forgetting.


While much attention has been paid towards learning good representations for inputs, an equally challenging problem in continual learning is finding good ways to preserve associations between these representations and output classes. Indeed, modern deep networks excel at learning complex and discriminating representations for many data types, which in some cases have resulted in super-human classification performance~\cite{Lecun2015}. However, these same networks are considerably degraded when classes are learned sequentially (one at a time), as opposed to being randomly interleaved in the training data~\cite{Ratcliff1990,McClelland1995,French1999}. The effect of this simple change is profound and has warranted the search for new mechanisms that can preserve input-output associations over long periods of time.


Since learning in the natural world often occurs sequentially, the past few years have witnessed an explosion of brain-inspired continual learning models. These models can be divided into three categories: 1) regularization models, where important weights (synaptic strengths) are identified and protected~\cite{Hinton1987,Fusi2005,Benna2016,Kirkpatrick2017,Zenke2017}; 2) experience replay models, which use external memory to store and re-activate old data~\cite{lopez2017gradient}, or which use a generative model to generate new data from prior experience~\cite{Ven2020,Tadros2020,shin2017continual}; and 3) complementary learning systems~\cite{McClelland1995,Roxin2013}, which partition memory storage into multiple sub-networks, each subject to different learning rules and rates. Importantly, these models often take inspiration from mammalian memory systems, such as the hippocampus~\cite{Wilson1994,Rasch2007} or the neocortex~\cite{Qin1997,Ji2007}, where detailed circuit anatomy and physiology are still lacking. Fortunately, continual learning is also faced by simpler organisms, such as insects, where supporting circuit mechanisms are understood at synaptic resolution~\cite{Takemura2017,Zheng2018,Li2020}.


Here, we developed an algorithm to reduce catastrophic forgetting by taking inspiration from the fruit fly olfactory system. This algorithm stitches together three well-known computational ideas --- sparse coding~\cite{Maurer2013,Ruvolo2013,Ororbia2019,Subutai2019,Rapp2020,Hitron2020}, synaptic freezing~\cite{Hinton1987,Fusi2005,Benna2016,Kirkpatrick2017,Zenke2017}, and perceptron-style learning~\cite{Minsky1988} --- in a unique and effective way, which we show boosts continual learning performance compared to alternative algorithms. Importantly, the \flymodel uses neurally-consistent associative learning and does not require backpropagation. Finally, we show that the fruit fly circuit performs better than alternative circuits in design space (e.g., replacing sparse coding with dense coding, associative learning with supervised learning, freezing synapses with not freezing synapses), which provides biological insight into the function of these evolved circuit motifs and how they operate together in the brain to sustain memories.

\section*{Results}

\subsection*{Circuit mechanisms for continual learning in fruit flies} 

How do fruit flies associate odors (inputs) with behaviors (classes) such that behaviors for odors learned long ago are not erased by newly learned odors? We first review the basic anatomy and physiology of two layers of the olfactory system that are relevant to the exposition here. For a more complete description of this circuit, see Modi et al.~\cite{Modi2020}.

The two-layer neural circuit we study takes as input an odor after a series of pre-processing steps have been applied. These steps begin at the sensory layer and include gain control~\cite{Root2008,Gorur2017}, noise reduction~\cite{Wilson2013}, and normalization~\cite{olsen2010divisive,Stevens2015}. After these steps, odors are represented by the firing rates of $d=50$ types of projection neurons (PNs), which constitute the input to the two-layer network motif described next.\\

\noindent \textbf{Sparse coding.} The goal of the first layer is to convert the dense input representation of the PNs into a sparse, high-dimensional representation~\cite{Cayco2019} (Figure~\ref{fig:overview}A). This is accomplished by a set of about 2000 Kenyon cells (KCs), which receive input from the PNs. The matrix connecting PNs to KCs is sparse and approximately random~\cite{caron2013random}; i.e., each KC randomly samples from about 6 of the 50 projection neurons and sums up their firing rates. Next, each KC provides feed-forward excitation to a single inhibitory neuron, called APL. In return, APL sends feed-back inhibition to each KC. The result of this loop is that approximately 95\% of the lowest-firing KCs are shut off, and the top 5\% remain firing, in what is often referred to as a winner-take-all (WTA) computation~\cite{Turner2008,Lin2014,Stevens2015}. Thus, an odor initially represented as a point in $\mathbb{R}^{50}_+$ is transformed, via a 40-fold dimensionality expansion followed by WTA thresholding, to a point in $\mathbb{R}_+^{2000}$, where only approximately 100 of the 2000 KCs are active (i.e., non-zero) for any given odor. 

This transformation was previously studied in the context of similarity search~\cite{Dasgupta2017,dasgupta2018neural,Papadimitriou2018,Ryali2020}, compressed sensing~\cite{Stevens2015,Zhang2016}, and pattern separation for subsequent learning~\cite{babadi2014sparseness,LitwinKumar2017,dasgupta2020expressivity}.\\

\noindent \textbf{Associative learning.} The goal of the second layer is to associate odors (sparse points in high-dimensional space) with behaviors.  In the fly, this is accomplished by a set of 34 mushroom body output neurons (MBONs~\cite{aso2014neuronal}), which receive input from the 2000 KCs, and then ultimately connect downstream onto motor neurons that drive behavior. Our focus will be on a subset of MBONs that encode the learned valence (class) of an odor. For example, there is an ``approach'' MBON that strongly responds if the odor was previously associated with a reward, and there is an ``avoid'' MBON that responds if the odor was associated with punishment~\cite{Hige2015}. Thus, the sparse, high-dimensional odor representation of the KCs is read-out by a smaller set of MBONs that encode behaviorally-relevant odor information important for decision-making.

The main locus of associative learning lies at the synapses between KCs and MBONs (Figure~\ref{fig:overview}). During training, say the fly is presented with a naive odor (odor A) that is paired with a punishment (e.g., an electric shock). How does the fly learn to avoid odor A in the future? Initially, the synapses from KCs activated by odor A to both the ``approach'' MBON and the ``avoid'' MBON have equal weights. When odor A is paired with punishment, the KCs representing odor A are activated around the same time that a punishment-signaling dopamine neuron fires in response to the shock. The released dopamine causes the synaptic strength between odor A KCs and the approach MBON to decrease, resulting in a net increase in the avoidance MBON response\footnote{Curiously, approach behaviors are learned by decreasing the avoid MBON response, as opposed to increasing the approach MBON response, as may be more intuitive.}. Eventually, the synaptic weights between odor A KCs and the approach MBON are sufficiently reduced to reliably learn the avoidance association \cite{Felsenberg2018}.

Importantly, the only synapses that are modified in each associative learning trial are those from odor A KCs to the approach MBON. All synapses from odor A KCs to the avoid MBON are frozen (i.e., left unchanged), as are all weights from silent KCs to both MBONs. Thus, the vast majority of synapses are frozen during any single odor-association trial. How is this implemented biologically? MBONs lie in physically separated ``compartments''~\cite{aso2014neuronal}. Each compartment has its own dopamine neurons, which only modulate KC$\rightarrow$MBON synapses that lie in the same compartment. In the example above, a punishment-signaling dopamine neuron lies in the same compartment as the approach MBON and only modifies synapses between active KCs and the approach MBON. Similarly, a reward-signaling dopamine neuron lies in the same compartment as the avoid MBON~\cite{Felsenberg2018}. Dopamine released in one compartment does not ``spillover'' to affect KC$\rightarrow$MBON synapses in neighboring compartments, allowing for compartment-specific learning rules~\cite{Aso2016}. 

To summarize, associative learning in the fly is driven by dopamine signals that only affect the synapses of sparse odor-activated KCs and a target MBON that drives behavior.

\subsection*{The \flymodel}

We now introduce a continual learning algorithm based on the two-layer olfactory circuit described above. 

As input, we are given a $d$-dimensional vector, $x = (x_1, x_2, \dots, x_d) \in\mathbb{R}^d$ (analogous to the projection neuron firing rates for an odor). As in the fly circuit, we assume that $x$ is pre-processed to remove noise and encode discriminative features. For example, when inputs are images, we could first pass each image through a deep network and use the representation in the penultimate layer of the network as input to our two-layer circuit. Pre-processing is essential when the original data are noisy and not well-separated but could be omitted in simpler datasets with more prominent separation among classes. To emphasize, our goal here is not to study the complexities of learning good representations, but rather to develop robust ways to associate inputs with outputs.

The first layer computes a sparse, high-dimensional representation of $x$. This layer consists of $m$ units (analogous to Kenyon cells), where $m \approx 40d$ (analogous to the expansion from 50 PNs to 2000 KCs). The input layer and the first layer are connected by a sparse, binary random matrix, $\Theta$, of size $m \times d$. Each column of $\Theta$ contains about $0.1d$ ones in random positions (analogous to each KC sampling from 6 of the 50 PNs), and the rest of the positions in the column are set to zero. The initial KC representation $\psi(x) = (\psi_1, \psi_2, \dots, \psi_m) \in \mathbb{R}^m$ is computed as:
\begin{align} 
    \psi(x) = \Theta x. \label{eqn:1}
\end{align}
After this dimensionality expansion, a winner-take-all process is applied, so that only the top $l$ most active KCs remain on, and the rest of the KCs are set to 0 (in the fly, $l=100$, since only 5\% of the 2000 KCs are left active after APL inhibition). This produces a sparse KC representation $\phi(x) = (\phi_1, \phi_2, \dots, \phi_m) \in \mathbb{R}^m$, where:
\begin{align} \label{eqn:2}
    \phi_{i} = 
    \begin{cases}
    \psi_{i} & \text{if $\psi_{i}$ is one of the $l$ largest positive entries of $\psi(x)$ } \\
    0 & \text{otherwise.}
    \end{cases}
\end{align}
For computational convenience, a min-max normalization is applied to $\phi(x)$ so that each KC has a value between 0 and 1. The matrix $\Theta$ is fixed and not modified during learning; i.e., there are no trainable parameters in the first layer.

The second layer is an associative learning layer, which contains $k$ output class units, $y = \{y_1, y_2, \dots, y_k\}$ (analogous to MBONs). The $m$ KCs and the $k$ MBONs are connected with all-to-all synapses. Say an input $x$ is to be associated with target MBON $j$. When $x$ arrives, a hypothetical dopamine neuron signaling class $j$ is activated at the same time, so that the only synapses that are modified are those between the KCs active in $\phi(x)$ and the $j$\textsuperscript{th} MBON. No other synapses --- including those from the active KCs in $\phi(x)$ to the other $k-1$ MBONs --- are modified. We refer to this as ``partial freezing'' of synaptic weights during learning. 

Formally, let $w_{ij} \in [0, 1]$ be the synaptic weight from KC $i$ to MBON $j$. Then, for all $i \in [1\dots m], j \in [1\dots k]$, the weight update rule after each input $x$ is:
\begin{align} \label{eqn:weights}
    w_{ij} = 
    \begin{cases}
    (1-\alpha)w_{ij} + \beta \phi_i & \text{if $j = \textrm{target}$} \\
    (1-\alpha)w_{ij} & \text{otherwise.}
    \end{cases}
\end{align}
Here, $\beta$ is the learning rate, and $\alpha$ is a very small forgetting term that mimics slow, background memory decay. In our experiments, we set $\alpha=0$ to minimize forgetting and to simplify the model. The problem of weight saturation arises when $\alpha=0$, since weights can only increase, and never decrease. However, despite tens of thousands of training steps, the vast majority of weights did not saturate since most KCs are inactive for most classes (sparse coding) and only a small fraction of the active KC synapses are modified during learning (partial freezing). Nonetheless, in practice, some small, non-zero $\alpha$ may be desired to avoid every synapse from eventually saturating.

Finally, biological synaptic weights have physical bounds on their strength, and here we mimic these bounds by capping weights to $[0,1]$.\\

\noindent \textbf{Similarities and differences to the fruit fly olfactory circuit.} The \flymodel is based on two core features of the fruit fly olfactory circuit: sparse coding and partial freezing of synaptic weights during learning. There are, however, additional complexities of this ``two-layer'' olfactory circuit that we do not consider here. First, there are additional recurrent connections in the circuit, including KC$\leftrightarrow$KC connections~\cite{Eichler2017} and dopamine$\leftrightarrow$MBON connections~\cite{Cervantes2017,Takemura2017}; further, there is an extensive, four-layer network of interactions amongst MBONs~\cite{aso2014neuronal}, the function of which still remains largely unknown. Second, we assume KCs and MBONs make all-to-all connections, whereas in reality, each MBON is typically connected to less than half of the KCs. KCs are divided into distinct lobes that innervate different compartments~\cite{aso2014neuronal}. Lobes and compartments allow for an odor's KC representation to be ``split'' into a parallel memory architecture, where each compartment has a different memory storage capacity, update flexibility, and retention and decay rates~\cite{Aso2016}. Third, we assumed that co-activation of dopamine neurons and KCs increases the synaptic weights between KCs and the target MBON, whereas in reality, when learning to avoid an odor, the strength of response to the opposite behavior (approach MBON) is decreased. Conceptually, the net effect is equivalent for binary classification, but the latter leads to additional weight interference when there are $> 2$ classes because it requires decreasing weights to all non-target MBONs. 

We excluded these additional features to reduce the number of model parameters and to sharpen our focus on the two core features mentioned above. These abstractions are in line with those made by previous models of this circuit (e.g.~\cite{Stevens2015,Peng2017,Mittal2020}). Some of these additional features may be useful in more sophisticated continual learning problems that are beyond the scope of our work here (Discussion).

\subsection*{Testing framework and problem setup} 

We tested each algorithm on two datasets using a class-incremental learning setup~\cite{farquhar2019towards,Ven2020}, in which the training data was ordered and split into sequential tasks. For the \mnist dataset (a combination of regular MNIST and Fashion MNIST; Methods), we used 10 non-overlapping tasks, where each task is a classification problem between two classes. For example, the first task is to classify between digits 0 and 1, the second task is to classify digits 2 and 3, etc. Similarly, the CIFAR-100 dataset (Methods) is divided into 25 non-overlapping tasks, where each task is a classification problem among four classes. In each task, all instances of one class are presented sequentially, followed by all instances of the second class. Only a single pass is made through the training data (i.e., one epoch) to mimic an online learning problem.

Testing is performed after the completion of training of each task, and is quantified using two measures. The first measure --- the accuracy for classes trained so far --- assesses how well classes from previous tasks remain correctly classified after a new task is learned. Specifically, after training task $i$, we report the accuracy of the model when tested on classes from all tasks $\leq i$. For example, say a model has been trained on the first three tasks --- classify 0 vs.\@ 1, 2 vs.\@ 3, and 4 vs.\@ 5. During the test phase of task three, the model is presented with test examples from digits 0--5, and their accuracy is reported. The second measure --- memory loss --- quantifies forgetting for each task separately. We define the memory loss of task $i$ as the accuracy of the model when tested (on classes from task $i$ only) immediately after training on task $i$ minus the accuracy when tested (again, on classes from task $i$ only) after training on all tasks, i.e., at the end of the experiment. For example, say the immediate accuracy of task $i$ is 0.80, and the accuracy of task $i$ at the end of the experiment is 0.70. Then the memory loss of task $i$ is 0.10. A memory loss of zero means that the memory of the task was perfectly preserved despite learning new tasks.\\

\noindent \textbf{Comparison to other methods.} We compared the \flymodel with five methods, briefly described below:
\begin{enumerate}
    \item \emph{Elastic weight consolidation} (EWC~\cite{Kirkpatrick2017}) uses the Fisher information criterion to identify weights that are important for previously learned tasks, and then introduces a penalty if these weights are modified when learning a new task.
    \item \emph{Gradient episodic memory} (GEM~\cite{lopez2017gradient}) uses a memory system that stores a subset of data from previously learned tasks. These data are used to assess how much the loss function on previous tasks increases when model parameters are updated for a new task.
    \item \emph{Brain-inspired replay} (BI-R~\cite{Ven2020}) protects old memories by using a generative model to replay activity patterns related to previously learned tasks. The replayed patterns are generated using feedback connections, without storing data.
    \item \emph{Vanilla} is a standard fully-connected neural network that does not have any explicit continual learning mechanism. This is used as a lower bound on performance.
    \item \emph{Offline} is a standard fully-connected neural network, but instead of learning tasks sequentially, it is presented with all classes from the tasks in a random order. For example, for the third task on \mnist, Offline is trained with digits 0--5 randomly shuffled. Then, for the fourth task, Offline is re-trained from scratch on digits 0--7. The Offline model is used as an upper bound on performance.
\end{enumerate}
\noindent All five of these methods use backpropagation for training weights (both PN$\rightarrow$KC weights and KC$\rightarrow$MBON weights). In addition, all five methods (except BI-R; Methods) use the same architecture as the \flymodel  --- the same number of layers, the same number of units per layer ($m$ KCs in the first layer, $k$ MBONs in the second layer) --- and they all use the same hidden unit activation function (ReLU). Finally, for a fair comparison, all methods, including the \flymodel, use the same representation for each input. Thus, the primary difference amongst methods is how learning mechanisms store and preserve memories.

\subsection*{The \flymodel outperforms existing methods in class-incremental learning}

The \flymodel reduced catastrophic forgetting compared to all four continual learning methods tested. For example, on the \mnist dataset (Figure~\ref{fig:continual}A), after training on 5 tasks (10 classes), the accuracy of the \flymodel was 0.86 $\pm$ 0.0006 compared to 0.77 $\pm$ 0.02 for BI-R, 0.69 $\pm$ 0.02 for GEM, 0.58 $\pm$ 0.10 for EWC, and 0.19 $\pm$ 0.0003 for Vanilla. At the end of training (10 tasks, 20 classes trained), the test accuracy of the \flymodel was at least 0.19 higher than any other method, and only 0.11 lower than the optimal Offline model, which is trained using all classes presented together, instead of sequentially.

Next, we used the memory loss measure (Methods) to quantify how well the ``memory'' of an old task is preserved after training new tasks (Figure~\ref{fig:continual}B, Figure~S1). As expected, the standard neural network (Vanilla) preserves almost no memory of previous tasks; i.e., it has a memory loss of nearly 1 for all tasks except the most recent task. While GEM, EWC, and BI-R perform better --- memory losses of 0.24, 0.27, and 0.42, respectively, averaged across all tasks --- the \flymodel has an average memory loss of only 0.07. This means that the accuracy of task $i$ was only degraded on average by 7\% at the end of training when using the \flymodel. 

Similar trends were observed on a second, more difficult dataset (CIFAR-100; Figure~\ref{fig:continual}C--D), where the \flymodel had an accuracy that was at least 0.15 greater than all continual learning methods, and performed only 0.13 worse than the Offline model.

\subsection*{Sparse coding and partial freezing are both required for continual learning}

An important challenge in theoretical neuroscience is to understand why circuits may be designed the way they are. Quantifying how evolved circuits fare against putative, alternative circuits in design space could provide insight into the biological function of observed network motifs. We first explored this question in the context of the two core components in the \flymodel: sparse coding of representations in the first layer, and partial freezing of synaptic weights in the associative learning layer. Are both of these components required, or can good performance be attained with only one or the other? 

We piecemeal explored the effects of replacing sparse coding with dense coding, and replacing partial freezing with a traditional single layer neural network (i.e., logistic regression), where every weight can change for each input. This gave us four combinations to test. The dense code was calculated in the same way as the sparse code, minus the winner-take-all step. In other words, for each input $x$, we used $\psi(x)$ (Equation~\eqref{eqn:1}, with min-max normalization) as its representation, instead of $\phi(x)$ (Equation~\eqref{eqn:2}). For logistic regression, the associative layer was trained using backpropagation.

Both sparse coding variants (with partial freezing or with logistic regression) performed substantially better than the two dense coding variants on both datasets (Figure~\ref{fig:SC_PF}A--B). For example, on \mnist, at the end of training, the sparse coding models had an average accuracy of 0.64 compared to 0.07 for the two dense coding models. Further, sparse coding with partial freezing (i.e., the \flymodel) performed better than sparse coding with logistic regression: 0.75 vs.\@ 0.54 on \mnist; 0.41 vs.\@ 0.21 on CIFAR-100. 

Hence, on at least the two datasets used here, both sparse coding and partial freezing are needed to optimize continual learning performance.

\subsection*{Empirical and theoretical comparison of the \flymodel with the perceptron}

The fruit fly associative learning algorithm (partial freezing) bears resemblance to a well-known supervised learning algorithm --- the perceptron~\cite{rosenblatt1958perceptron} --- albeit two differences. First, both algorithms increase weights to the correct target MBON (class), but the perceptron also decreases the weights to the incorrect MBON if a mistake is made. Second, the perceptron does not modify weights when a correct prediction is made, whereas partial freezing updates weights even if the correct prediction is made. Next, we continued our exploration of circuit design space by studying how the four combinations of these two rules affect continual learning.

The first model (Perceptron v1) is the classic perceptron learning algorithm, where weights are only modified if an incorrect prediction is made, by increasing weights to the correct class and decreasing weights to the incorrectly predicted class. The second model (Perceptron v2) also only learns when a mistake is made, but it only increases weights to the correct class (i.e., it does not decrease weights to the incorrect class). The third model (Perceptron v3) increases weights to the correct class regardless of whether a mistake is made, and it  decreases weights to the incorrect class when a mistake is made. Finally, the fourth model (Perceptron v4) is equivalent to the \flymodel; it simply increases weights to the correct class regardless of whether a mistake is made. All models start with the same sparse, high-dimensional input representations in the first layer. See Methods for pseudocode for each model.

Overall, we find a striking difference in continual learning with these two tweaks, with the \flymodel performing significantly better than the other three models on both datasets (Figure~\ref{fig:perceptron}A--B). Specifically, learning regardless of whether a mistake is made (v3 and v4) works better than mistake-only learning (v1 and v2), and decreasing the weights to incorrectly predicted class hurts performance (v4 compared to v3; no major difference between v2 and v1). 

Why does decreasing weights to the incorrect class (v1 and v3) result in poor performance? This feature of the perceptron algorithm is believed to help create a larger boundary (margin) between the predicted incorrect class and the correct class. However, in the Supplement (Lemma~2), we show analytically that under continual learning, it is easy to come up with instances where this feature leads to catastrophic forgetting. Intuitively, this occurs when two (similar) inputs share overlapping representations, yet belong to different classes. The synapses of shared neurons are strengthened towards the the class most recently observed, and weakened towards the other class. Thus, when the first input is observed again, it is associated with the second input's class. In other words, decreasing weights to the incorrect class causes shared weights to be ``hijacked'' by recent classes observed (Figure~S2A--C). We tested this empirically on the \mnist dataset and found that, while decreasing weights when mistakes are made enables faster initial discrimination, it also leads to faster forgetting (Figure~S3A). Indeed, this effect is particularly pronounced when the two classes are similar (digits `3' vs.\@ `5'; Figure~S3B) rather than dissimilar (digits `3' vs.\@ `4'; Figure~S3C). In contrast, the \flymodel avoids this issue because the shared neurons are ``split'' between both classes, and thus, cancel each other out (Figure~S2D, Figure~S3).

In support of our empirical findings, we show analytically that partial freezing in the \flymodel (v4) reduces catastrophic forgetting because MBON weight vectors converge over time to the mean of its class inputs scaled by a constant (Supplement, Lemmas~3 and 4, Theorems~5 and 8).

\subsection*{Sparse coding provably creates favorable separation for continual learning}

Why does sparse coding reduce memory interference under continual learning? We first show that the partial freezing algorithm alone will provably learn to correctly distinguish classes if the classes satisfy a {\it separation condition} that says, roughly, that dot products between points within the same class are, on average, greater than between classes. We then show that adding sparse coding enhances the separation of classes~\cite{babadi2014sparseness}, making associative learning easier.
\begin{defn}
Let $\pi_1, \ldots, \pi_k$ be distributions over $\R^d$, corresponding to $k$ classes of data points. We say the classes are $\gamma$-separated, for some $\gamma > 0$, if for any pair of classes $j \neq j'$, and any point $x_o$ from class $j$,
$$ \E_{X \sim \pi_{j}} [x_o \cdot X] \geq \gamma + \E_{X' \sim \pi_{j'}}[x_o \cdot X'] .$$
Here, the notation $\E_{X \sim \pi}$ refers to expected value under a vector $X$ drawn at random from distribution $\pi$. 
\label{def:simple-sep}
\end{defn}

Under $\gamma$-separation, the labeling rule
$$ x \mapsto \argmax_{j} \, w_j \cdot x $$
is a perfect classifier if the $w_j$ (i.e., the KC $\rightarrow$ MBON weight vector for class $j$) are the means of their respective classes, that is, $w_j = \E_{X \sim \pi_j}[X]$. This holds even if the means are only approximately accurate, within $O(\gamma)$ (Supplement, Theorem 8). The partial freezing algorithm can, in turn, be shown to produce such mean-estimates (Supplement, Theorem 5).

The separation condition of Definition~\ref{def:simple-sep} is quite strong and might not hold in the original data space. But we will show that subsequent sparse coding can nonetheless produce this condition, so that the partial freezing algorithm, when run on the sparse encodings, performs well.

To see a simple model of how this can happen, suppose that there are $N$ {\it prototypical inputs}, denoted $p_1, \ldots, p_N \in \X$, where $\X \subset \mathbb{R}^d$, that are somewhat separated from each other:
$$ \frac{p_i \cdot p_j}{\|p_i\| \|p_j\|} \leq \xi, $$
for some $\xi \in [0,1)$. Each $p_i$ has a label $y_i \in \{1,2,\ldots,k\}$. Let $C_j \subset [N]$ be the set of prototypes whose label is $j$. Since the labels are arbitrary, these classes will in general not be linearly separable in the original space~(Figure S4).

Suppose the sparse coding map $\phi: \X \rightarrow \{0,1\}^m$ generates $k$-sparse representations with the following property: for any $x,x' \in \X$,
$$ \phi(x) \cdot \phi(x') \leq k \, f \left( \frac{x \cdot x'}{\|x\| \|x'\|} \right) ,$$
where $f: [-1,1] \rightarrow [0,1]$ is a function that captures how the coding process transforms dot products. In earlier work~\cite{dasgupta2018neural}, we have characterized $f$ for two types of random mappings, a sparse binary matrix (inspired by the fly's architecture) and a dense Gaussian matrix (common in engineering applications). In either case, $f(s)$ is a much shrunken version of $s$; in the dense Gaussian case, for instance, it is roughly $(k/m)^{1-s}$.

We can show that for suitable $\xi$, the sparse representations of the prototypes --- that is, $\phi(p_1), \ldots, \phi(p_N) \in \{0,1\}^m$ --- are then guaranteed to be separable, so that the partial freezing algorithm will converge to a perfect classifier.
\begin{thm}
Let $N_o = \max_j |C_j|$. Under the assumptions above, the sparse representation of the data set, $\{(\phi(p_1), y_1), \ldots, (\phi(p_N), y_N)\}$, is $(1/N_o - f(\xi))$-separated in the sense of Definition~\ref{def:simple-sep}.
\label{thm:sparse-coding-sep-simple}
\end{thm}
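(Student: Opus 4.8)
The plan is to take each class distribution $\pi_j$ to be uniform over the sparse codes $\{\phi(p_i) : i \in C_j\}$ of that class's prototypes, and then verify Definition~\ref{def:simple-sep} directly by separately lower-bounding the within-class expected dot product and upper-bounding the between-class expected dot product for an arbitrary anchor $x_o = \phi(p_a)$ with $a \in C_j$. The whole argument rests on a single asymmetry: within the class the average over $\pi_j$ contains the \emph{self} term $\phi(p_a)\cdot\phi(p_a)$, which is large, whereas across classes no such self term appears and every contribution is squeezed by the coding map.

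First I would handle the within-class term. Writing
$$\E_{X\sim\pi_j}[\phi(p_a)\cdot X] = \frac{1}{|C_j|}\Big(\phi(p_a)\cdot\phi(p_a) + \sum_{i\in C_j,\,i\neq a}\phi(p_a)\cdot\phi(p_i)\Big),$$
I note that each summand is a dot product of nonnegative (binary) vectors, hence nonnegative, so the off-diagonal sum can only help and may be dropped for a lower bound. This leaves $\E_{X\sim\pi_j}[\phi(p_a)\cdot X]\ge \phi(p_a)\cdot\phi(p_a)/|C_j|$, and since $|C_j|\le N_o$ this is at least $\phi(p_a)\cdot\phi(p_a)/N_o$. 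Crucially, the within-class off-diagonal overlaps need no separation hypothesis at all; they are discarded purely by nonnegativity, which is what frees the argument from any assumption that same-class prototypes are close.

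Next I would bound the between-class term. For any $j'\neq j$ and any $i\in C_{j'}$, the prototypes $p_a$ and $p_i$ are distinct, so the hypothesis gives $\tfrac{p_a\cdot p_i}{\|p_a\|\,\|p_i\|}\le\xi$. Applying the coding property and using that $f$ is nondecreasing (as established for both the sparse-binary and dense-Gaussian maps in~\cite{dasgupta2018neural}), each overlap obeys $\phi(p_a)\cdot\phi(p_i)\le \phi(p_a)\cdot\phi(p_a)\cdot f(\xi)$. Since every term in the average meets this bound, $\E_{X'\sim\pi_{j'}}[\phi(p_a)\cdot X']\le \phi(p_a)\cdot\phi(p_a)\cdot f(\xi)$. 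Subtracting and dividing through by the common self-overlap $\phi(p_a)\cdot\phi(p_a)$ (equivalently, working with the unit-normalized codes produced by the min--max normalization step, for which the coding bound reads simply $\phi(x)\cdot\phi(x')\le f(\cdot)$ and the self-overlap is $1$) yields $\E_{X\sim\pi_j}[\phi(p_a)\cdot X]-\E_{X'\sim\pi_{j'}}[\phi(p_a)\cdot X']\ge 1/N_o - f(\xi)$; as $a\in C_j$ and the ordered pair $j\neq j'$ were arbitrary, this is exactly $(1/N_o - f(\xi))$-separation.

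The step I expect to be most delicate is not any single inequality but the bookkeeping that pins down the precise constant: counting the self term once with weight $1/|C_j|$, justifying that the within-class off-diagonal overlaps may be discarded by nonnegativity, and fixing the normalization so the self-overlap contributes $1$ rather than a raw factor of the sparsity level. Conceptually, $1/N_o$ is the ``signal'' a point gets from recognizing itself, diluted by the size of the largest class, while $f(\xi)$ is the residual cross-class ``noise'' surviving after sparse coding shrinks the prototype correlations; the theorem simply asserts that signal exceeds noise, and the role of the expansion is to make $f(\xi)$ small enough that it does.
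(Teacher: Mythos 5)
Your proof is correct, and it is the natural direct argument: the paper itself only cites a more general supplementary result (Theorem~9, covering noisy perturbations of the prototypes), of which your argument is essentially the noiseless specialization. The two load-bearing steps --- keeping only the self term $\phi(p_a)\cdot\phi(p_a)$ in the within-class average (discarding the off-diagonal overlaps by nonnegativity of binary codes, so no assumption that same-class prototypes are similar is needed) and bounding every cross-class overlap by $k\,f(\xi)$ via monotonicity of $f$ --- are exactly what the separation constant $1/N_o - f(\xi)$ reflects, and you correctly identify that the argument needs $f$ nondecreasing, which holds for both random maps characterized in~\cite{dasgupta2018neural}. The one place to be slightly more careful is the final normalization: for the raw $k$-sparse binary codes your computation literally yields $k(1/N_o - f(\xi))$-separation, and ``dividing through by the self-overlap'' does not change the data set, so to land on the stated constant you should either (i) observe that $k \geq 1$ makes $k(1/N_o - f(\xi)) \geq 1/N_o - f(\xi)$ in the only regime where the theorem has content ($1/N_o > f(\xi)$), or (ii) state the result for codes rescaled to unit self-overlap. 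You gesture at both options; either closes the gap, and the bound you prove is if anything stronger than claimed.
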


\begin{proof}
This is a consequence of Theorem 9 in the Supplement, a more general result that applies to a broader model in which observed data are noisy versions of the prototypes. 
\end{proof}

\section*{Discussion}

We developed a simple and light-weight neural algorithm to alleviate catastrophic forgetting, inspired by how fruit flies learn odor-behavior associations. The \flymodel outperformed three popular class-incremental continual learning algorithms on two benchmark datasets (\mnist and CIFAR-100), despite not using external memory, generative replay, nor backpropagation. We showed that alternative circuits in design space, including the classic perceptron learning rule, suffered more catastrophic forgetting than the \flymodel, potentially shedding new light on the biological function and conservation of this circuit motif. Finally, we grounded these ideas theoretically by proving that MBON weight vectors in the \flymodel converge to the mean representation of its class, and that sparse coding further reduces memory interference by better separating classes. Our work exemplifies how understanding detailed neural anatomy and physiology in a tractable model system can be translated into efficient architectures for use in artificial neural networks. 

The two main features of the \flymodel --- sparse coding and partial synaptic freezing --- are  well-appreciated in both neuroscience and machine learning. For example, sparse, high-dimensional representations have long been recognized as central to neural encoding~\cite{Kanerva1988}, hyper-dimensional computing~\cite{Kanerva2009}, and  classification and recognition tasks~\cite{babadi2014sparseness}. However, the benefits of such representations towards continual learning have not been well-quantified. Similarly, the notion of ``freezing'' certain weights during learning has been used in both classic perceptrons and modern deep networks~\cite{Kirkpatrick2017,Zenke2017}, but these methods are still subject to interference caused by dense representations. Hence, the fruit fly circuit evolved a unique combination of common computational ingredients that work effectively in practice. 

The \flymodel performs associative rather than supervised learning. In associative learning, the same learning rule is applied regardless of whether the model makes a mistake. In traditional supervised learning, changes are only made to weights when the model makes a mistake, and the changes are applied to weights for both the correct and the incorrect class labels. By performing associative learning, the \flymodel garners two benefits. First, the \flymodel learns each class independently compared to supervised methods, which focus on discrimination between multiple classes at a time. We showed that the latter is particularly susceptible to interference, especially when class representations are overlapping. Second, by learning each class independently, the \flymodel is flexible about the total number of classes to be learned; the network is easily expandable to more classes, if necessary. Our results suggest that some traditional benefits of supervised classification may not carry over into the continual learning setting~\cite{Hand2006}, and that association-like models may better preserve memories when classes are learned sequentially. 

There are additional features of the fruit fly circuitry (specifically, the mushroom body) that remain under-explored computationally. First, instead of using one output neuron (MBON) per behavior, the mushroom body contains multiple output neurons per behavior, with each output neuron learning at a different rate~\cite{Hige2015,Aso2016}. This simultaneously provides fast learning with poor retention (large learning rates) and slow learning with longer retention (small learning rates), which is reminiscent of complementary learning systems~\cite{Parisi2019}. Second, the mushroom body contains mechanisms for memory extinction~\cite{Felsenberg2018} and reversal learning~\cite{felsenberg2017re,felsenberg2021changing}, which are used to over-write specific memories that are no longer accurate. Third, there is evidence of memory replay in the mushroom bodytriggered by a neuron called DPM, which is required not during, but rather after memory formation, in order for memories to be consolidated~\cite{yu2005drosophila,Haynes2015,Cognigni2018}.

Beyond catastrophic forgetting, there are additional challenges of continual learning that remain outstanding. These challenges include forward transfer learning (i.e., information learned in the past should help with learning new information more efficiently) and backward transfer learning (i.e., learning new information helps ``deepen'' the understanding of previously learned information). None of the algorithms we tested, including the \flymodel, were specifically designed to addresses these challenges, with the exception of GEM~\cite{lopez2017gradient}, which indeed achieved slightly negative memory losses (i.e., tasks learned after task $i$ improve the accuracy of task $i$; Figure~\ref{fig:continual}D), indicating some success at backward transfer learning. Biologically, circuit mechanisms supporting transfer learning remain unknown.

Finally, a motif similar to that of the fruit fly olfactory system also appears in the mouse olfactory system, where sparse representations in the piriform cortex project to other learning-related areas of the brain~\cite{Komiyama2006,Wang2020}. In addition, the visual system uses many successive layers to extract discriminative features~\cite{Riesenhuber1999,Tacchetti2018}, which are then projected to the hippocampus, where a similar sparse, high-dimensional representation is used for memory storage~\cite{olshausen2004sparse,wixted2014sparse,lodge2019synaptic}. Thus, the principles of learning studied here may help illuminate how continual learning is implemented in other brain regions and species.

\subsection*{Acknowledgment}
\noindent
Y.S. was supported by a Swartz Foundation Fellowship.

\section*{Methods}
\noindent \textbf{Datasets and pre-processing} We tested our model on two datasets.

\textit{\mnist:} This benchmark combines MNIST and Fashion MNIST. For training, MNIST contains 60,000 gray-scale images of 10 classes of hand-written digits (0--9), and Fashion MNIST~\cite{xiao2017} contains 60,000 gray-scale images of 10 classes of fashion items (e.g., purse, pants, etc.). The test set contains 10,000 additional images from each dataset. Together, the two datasets contain 20 classes. The 10 digits in MNIST are labelled 0--9, and the 10 classes in Fashion MNIST are labelled 10--19 in our experiments. To generate a generic input representation for each image, we trained a LeNet5~\cite{lecun1998gradient} network (learning rate = $0.001$, batch size = 64, number of epochs = 25, with batch normalization and Adam) on KMNIST~\cite{clanuwat2018deep}, which contains 60,000 images for 10 classes of hand-written Japanese characters. The penultimate layer of this network contains 84 hidden units (features). We used this trained LeNet5 as an encoder to extract 84 features for each training and test image in MNIST and Fashion MNIST.

\textit{CIFAR-100:} This benchmark contains 50,000 RGB images for 100 classes of real-life objects in the training set, and 10,000 images in the testing set. To generate input representations, we used the penultimate layer (512 hidden nodes) of ResNet18~\cite{HeZRS15} pre-trained on ImageNet (downloaded from \url{https://pytorch.org/docs/stable/torchvision/models.html#id27}). Thus, each CIFAR-100 image was represented as a 512-dimensional vector.\\

\noindent \textbf{Network architectures.} All methods we tested share the same network architecture: a three-layer network with an input layer (analog to PNs in fruit flies), a single hidden layer (analog to KCs) and an output layer (analog to the MBONs). For the \mnist dataset, the network contains 84 nodes in the input layer, 3200 nodes in the hidden layer, and 20 nodes in the output layer. For CIFAR-100, these three numbers are 512, 20000, and 100 respectively. The size of the hidden layer was selected to be approximately 40x larger than the input layer, as per the fly circuit.

For all models except the \flymodel, the three layers make all-to-all connections. For fly model, the PN and KC layer are connected via a sparse random matrix ($\Theta$); each KC sums over 10 randomly selected PNs for \mnist, and 64 randomy PNs for CIFAR-100.\\

\noindent \textbf{Implemenations of other methods.} GEM and EWC implementations are adapted from: \url{https://github.com/facebookresearch/GradientEpisodicMemory}. The BI-R implementation is adapted from: \url{https://github.com/GMvandeVen/brain-inspired-replay}.\\

\noindent \textbf{Parameters.} Parameters for each model and dataset were independently selected using grid search to maximize accuracy.
\begin{itemize}
\item \flymodel: learning rate: 0.01 (\mnist), 0.2 (CIFAR-100); $l = m/k$, where $k=20$, the number of classes for \mnist and $k=100$ for CIFAR-100; and $m=3200$, the number of Kenyon cells for \mnist, and $m=20000$ for CIFAR-100.
\item GEM: learning rate: 0.001, memory strength: 0.5, n memories: 256, batch size: 64, for both datasets.
\item EWC: learning rate: 0.1 (\mnist), 0.001 (CIFAR-100); memory strength: 1000, n memories: 1000, batch size: 64, for both datasets.
\item BI-R: learning rate: 0.001, batch size: 64 for both datasets. The BI-R architecture and other parameters are default to the original implementation~\cite{Ven2020}.
\item Vanilla: learning rate: 0.001, batch size: 64, for both datasets.
\item Offline: learning rate: 0.001, batch size: 64, for both datasets.
\end{itemize}

For Offline, Vanilla, EWC, and GEM, a softmax activation is used for the output layer, and optimization is performed using stochasic gradient descent (SGD). For BI-R, no activation function is applied to the output layer, and optimization is performed using Adam with $\beta_1 = 0.900$, $\beta_2=0.999$.\\

We report the average and standard deviation of both evaluation measures for each method over five random initializations. 


\clearpage
\noindent \textbf{Perceptron variations.} The update rules for the four perceptron variations are listed below:

\noindent \begin{minipage}{0.44\textwidth}
\begin{algorithm}[H]
    \centering
    \caption{\textbf{v1} (Original)}
    \begin{algorithmic}[1]
        \For{$x$ in data}
            \If {predict $\neq$ target}
                \State{weight[target] $\pluseq$ $\beta x$}
                \State{weight[predict] $\minuseq$ $\beta x$}
            \EndIf
        \EndFor
    \end{algorithmic}
\end{algorithm}
\end{minipage}
\hfill
\begin{minipage}{0.44\textwidth}
\begin{algorithm}[H]
    \centering
    \caption{\textbf{v2}}
    \begin{algorithmic}[1]
        \For{$x$ in data}
            \If {predict $\neq$ target}
                \State{weight[target] $\pluseq$ $\beta x$}
                \State
            \EndIf
        \EndFor
    \end{algorithmic}
\end{algorithm}
\end{minipage}

\vspace{0.3in}

\noindent \begin{minipage}{0.44\textwidth}
\begin{algorithm}[H]
    \centering
    \caption{\textbf{v3}}
    \begin{algorithmic}[1]
        \For{$x$ in data}
            \If {predict $\neq$ target}
                \State{weight[target] $\pluseq$ $\beta x$}
                \State{weight[predict] $\minuseq$ $\beta x$}
            \Else
                \State{weight[target] $\pluseq$ $\beta x$}
            \EndIf
        \EndFor
    \end{algorithmic}
\end{algorithm}
\end{minipage}
\hfill
\begin{minipage}{0.44\textwidth}
\begin{algorithm}[H]
    \centering
    \caption{\textbf{v4} (\flymodel)}
    \begin{algorithmic}[1]
        \For{$x$ in data}
            \If {predict $\neq$ target}
                \State{weight[target] $\pluseq$ $\beta x$}
                \State
            \Else
                \State{weight[target] $\pluseq$ $\beta x$}                
            \EndIf
        \EndFor
    \end{algorithmic}
\end{algorithm}
\end{minipage}

\clearpage
\section*{Figures}

\begin{figure*}[h]
\begin{center}
\includegraphics[width=\textwidth]{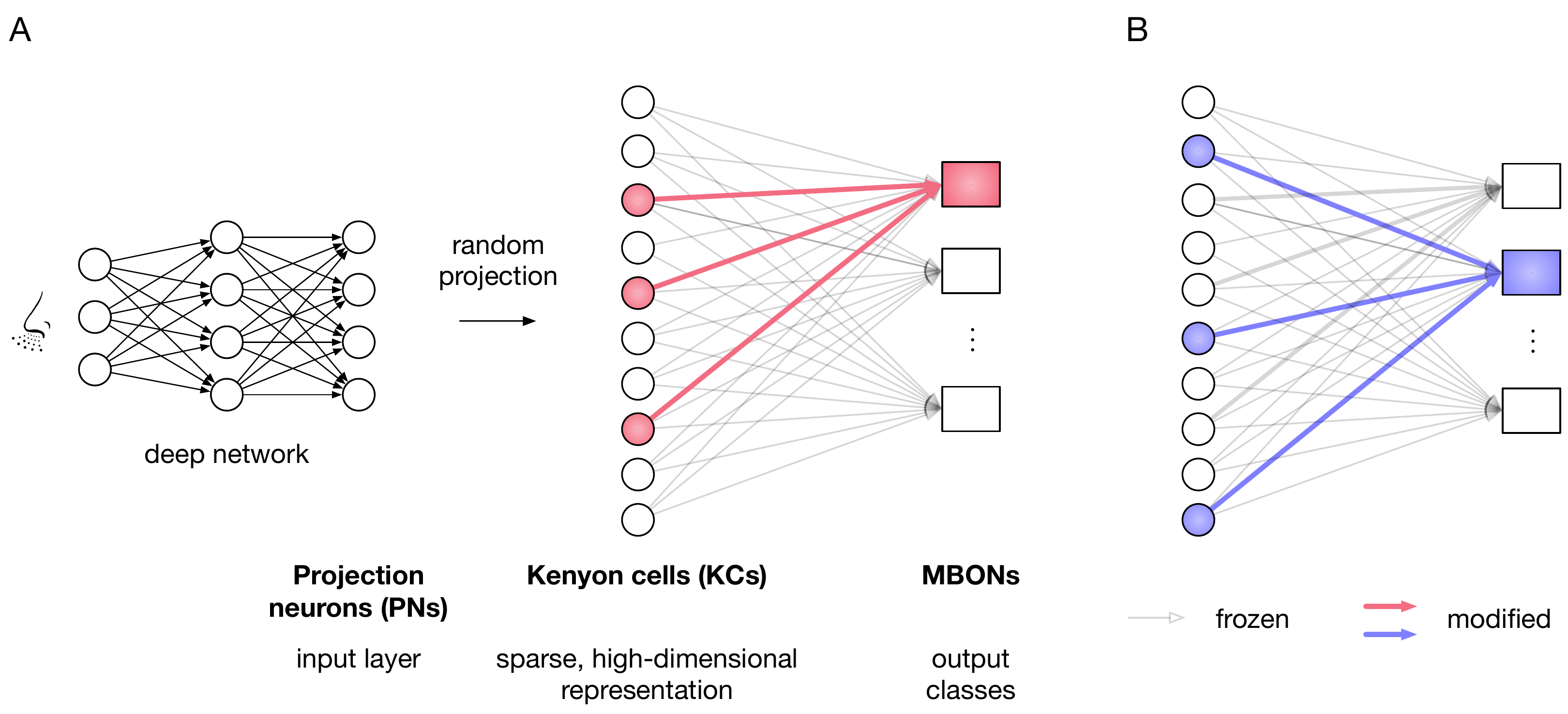}
\end{center}
\vspace{-0.1in}
\caption{\textbf{A two-layer circuit for continual learning in the fruit fly olfactory system.} A) An input (odor) is received by the sensory layer and pre-processed via a series of transformations. In the fruit fly, this pre-processing includes noise reduction, normalization, and gain control. In a deep network, pre-processing is similarly used to generate a suitable representation for learning. After these transformations, the dimensionality of the pre-processed input (PNs) is expanded via a random projection and is sparsified via winner-take-all thresholding. This leaves only a few Kenyon cells active per odor (indicated by red shading). To associate the odor with an output class (MBON), only the synapses connecting the active Kenyon cells to the target MBON are modified. The rest of the synapses are frozen. B) A second example with a second odor, showing different Kenyon cells activated, associated with a different MBON.}
\label{fig:overview}
\end{figure*}

\begin{figure*}[h]
\begin{center}
\includegraphics[width=\textwidth]{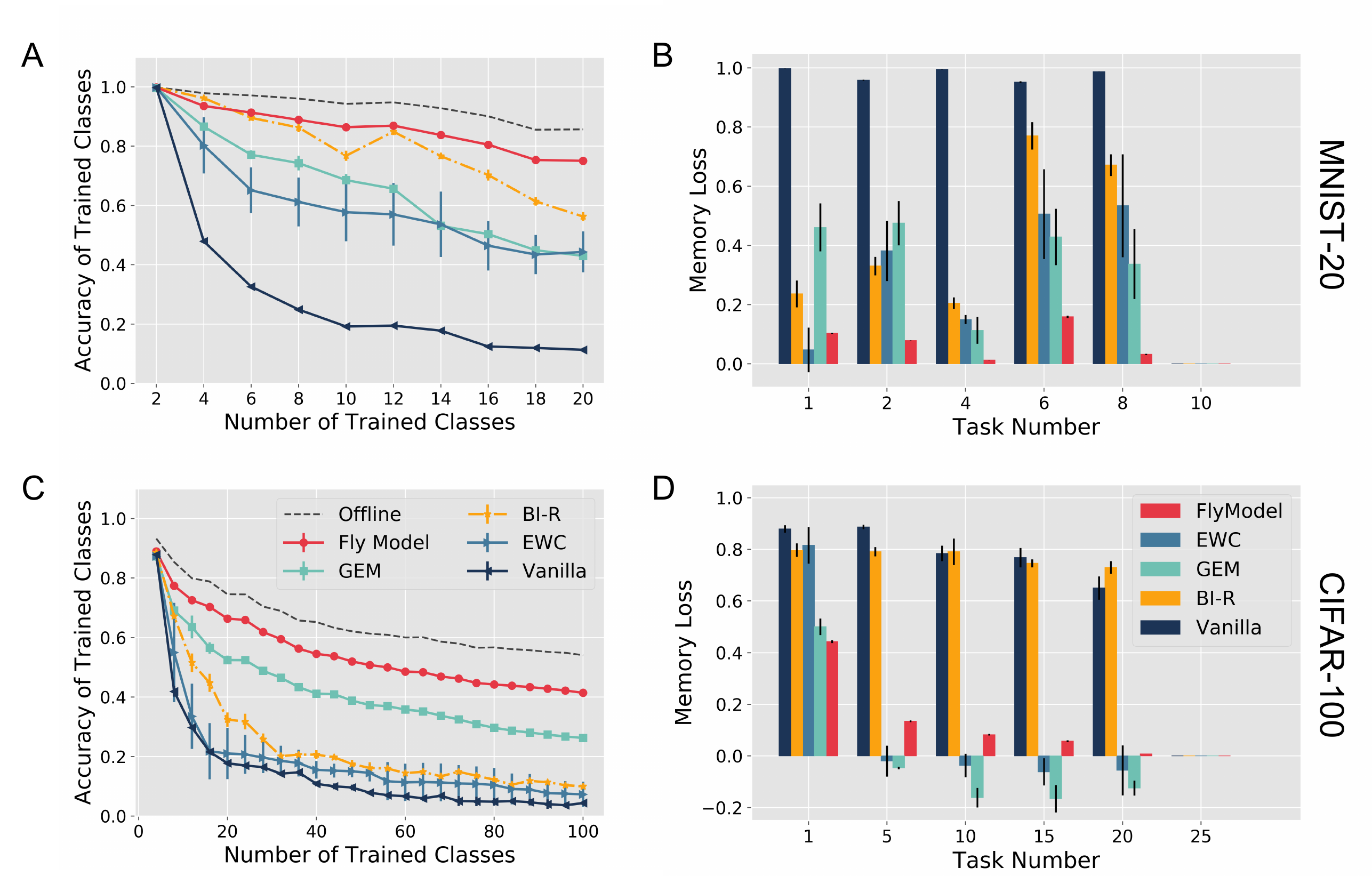}
\end{center}
\vspace{-0.1in}
\caption{\textbf{The \flymodel outperforms existing continual learning methods in class-incremental learning.} A) The $x$-axis is the number of classes trained on, and the $y$-axis is the classification accuracy when testing the model on the classes trained on thus far. The Offline method (dashed black line) shows the optimal classification accuracy when classes are presented together, instead of sequentially. Error bars show standard deviation of the test accuracy over 5 random initializations for GEM, BI-R, EWC, and Vanilla, or over 5 random matrices ($\Theta$) for the \flymodel. B) The $x$-axis is the task number during training, and the $y$-axis is the memory loss of the task, which measures how much the network has forgotten about the task as a result of subsequent training. A--B) \mnist dataset. C--D) CIFAR-100. The memory loss of all tasks is shown in Figure~S1.}
\label{fig:continual}
\end{figure*}

\begin{figure*}[h]
\begin{center}
\includegraphics[width=\textwidth]{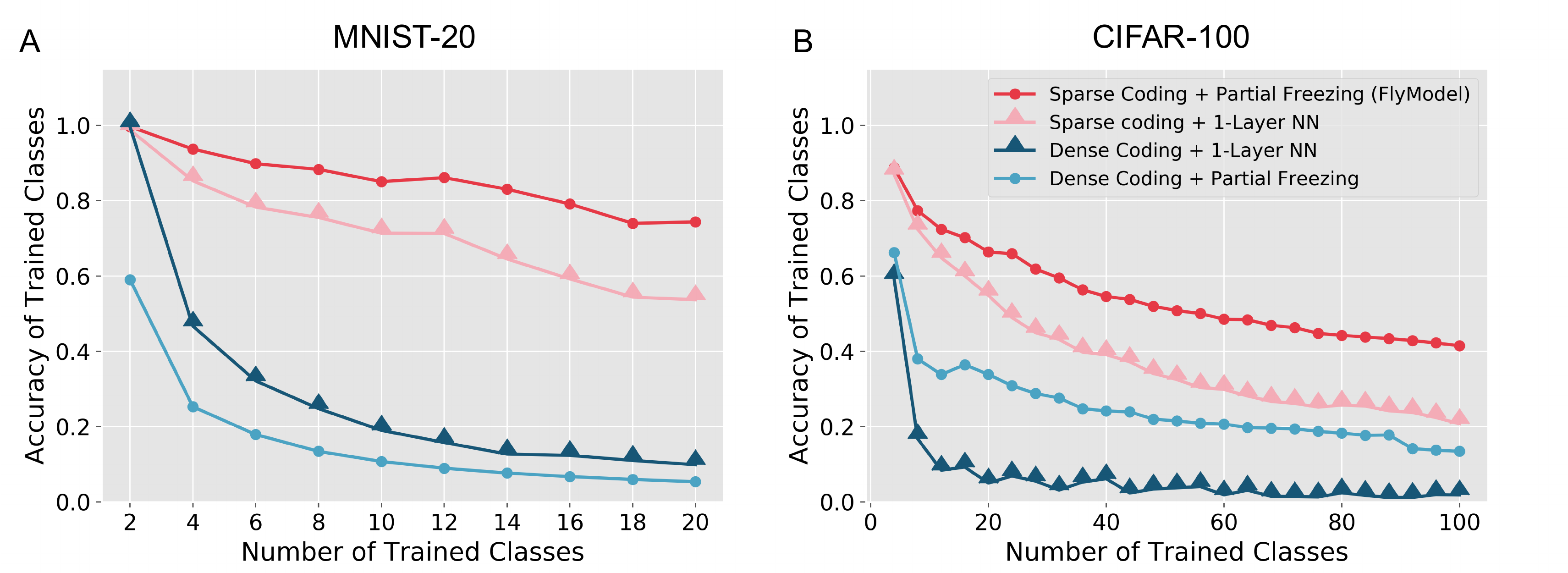}
\end{center}
\vspace{-0.1in}
\caption{\textbf{Sparse coding and partial freezing are both required for continual learning.} Axes are the same as those in Figure 2A. Both sparse coding methods outperform both dense coding methods. When using sparse coding, partial freezing outperforms logistic regression (1-layer neural network). A) \mnist. B) CIFAR-100.}
\label{fig:SC_PF}
\end{figure*}

\begin{figure*}[h]
\begin{center}
\includegraphics[width=\textwidth]{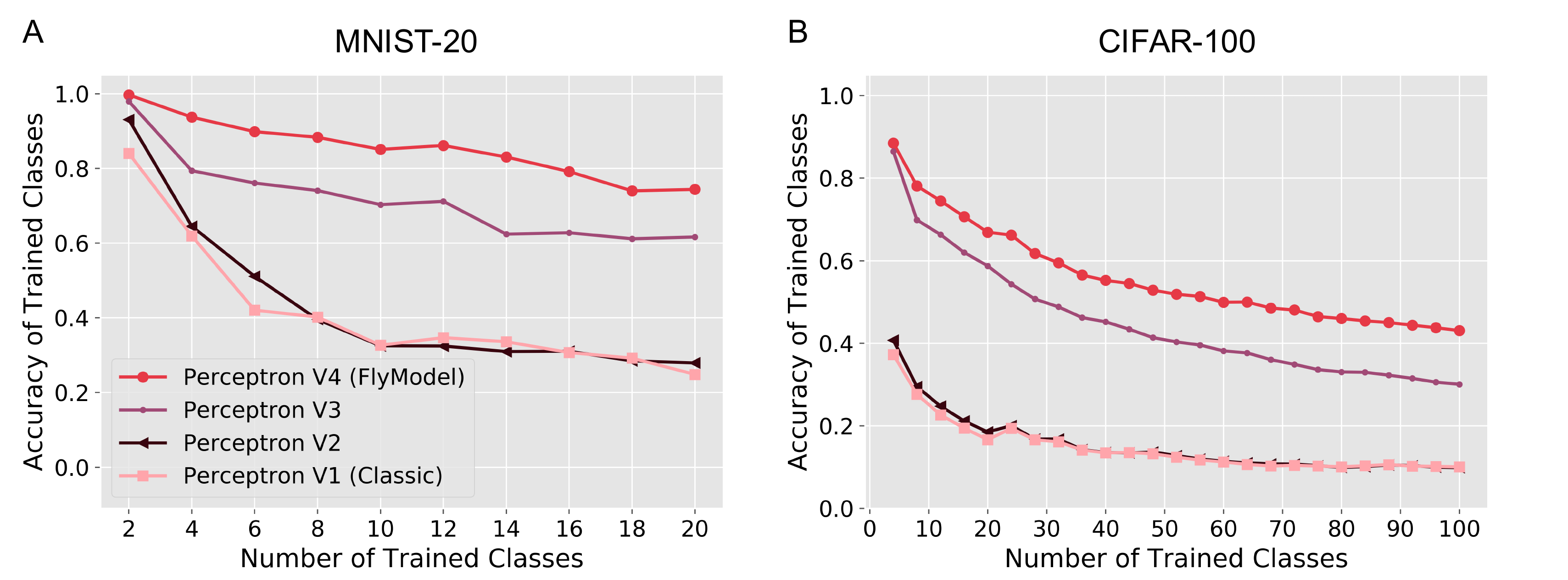}
\end{center}
\vspace{-0.1in}
\caption{\textbf{Continual learning performance for the four perceptron variants.} Axes are the same as those in Figure 2A. Compared to the classic perceptron learning algorithm (Perceptron v1), the \flymodel (Perceptron v4) learns regardless of whether a mistake is made, and it does not decrease weights to the incorrect class when mistakes are made. These two changes significantly improve continual learning performance. A) \mnist. B) CIFAR-100.}
\label{fig:perceptron}
\end{figure*}

\clearpage
\printbibliography

\end{document}